\DeclareMathOperator{\TC}{TC}
\newcommand{\E}{\mathbb{E}}
\renewcommand{\H}{\mathcal{H}}
\newcommand{\D}{\mathcal{D}}
\renewcommand{\L}{\mathcal{L}}
\newcommand{\x}{{y}}
\newcommand{\n}{{n}}
\newcommand{\y}{{z}}
\newcommand{\z}{\mathbf{x}}
\newcommand{\hz}{\hat{\z}}
\newcommand{\half}{\frac{1}{2}}
\newcommand{\be}{\begin{equation}}
\newcommand{\ee}{\end{equation}}
\newcommand{\bea}{\begin{eqnarray}}
\newcommand{\eea}{\end{eqnarray}}
\newcommand{\utext}[2]{\underbrace{#1}_{\text{#2}}}
\begin{document}

\title{A Separation Principle for Control\\ in the Age of Deep Learning}

\author{Alessandro Achille \& Stefano Soatto \\
Department of Computer Science \\
University of California, Los Angeles \\
405 Hilgard Ave, Los Angeles, 90095, CA, USA \\
\texttt{\{achille,soatto\}@cs.ucla.edu}
}

\maketitle

\begin{abstract}
We review the problem of defining and inferring a ``state'' for a control system based on complex, high-dimensional, highly uncertain measurement streams such as videos. Such a state, or representation, should contain all and only the information needed for control, and discount nuisance variability in the data.  It should also have finite complexity, ideally modulated depending on available resources. This representation is what we want to store in memory in lieu of the data, as it ``separates'' the control task from the measurement process. For the trivial case with no dynamics, a representation can be inferred by minimizing the Information Bottleneck Lagrangian in a function class realized by deep neural networks. The resulting representation has much higher dimension than the data, already in the millions, but it is smaller in the sense of information content, retaining only what is needed for the task. This process also yields representations that are invariant to nuisance factors and having maximally independent components. We extend these ideas to the dynamic case, where the representation is the posterior density of the task variable given the measurements up to the current time, which is in general much simpler than the prediction density maintained by the classical Bayesian filter. Again this can be finitely-parametrized using a deep neural network, and already some applications are beginning to emerge. No  explicit assumption of Markovianity is needed; instead, complexity trades off approximation of an optimal representation, including the degree of Markovianity.
\end{abstract}

\def\comment#1{ {\color{red}#1}}
\def\cut#1{{}}

\tableofcontents

\newcommand{\xzyfig}[2][]{{
    \if\relax\detokenize{#1}\relax \def\name{#2} \else \def\name{#1} \fi
    \if#21
        \node[latent] (z#2) {$z_{\name}$};
    \else
        \node[latent, right = of z\the\numexpr(#2-1)] (z#2) {$z_{\name}$};
        \edge[] {z\the\numexpr(#2-1)}{z#2}
    \fi 
    \node[obs, above = of z#2] (x#2) {$x_{\name}$};
    \node[latent, below = of z#2] (y#2) {$y_{\name}$};
    \if#21\else\edge[dashed] {x\the\numexpr(#2-1)}{x#2}\fi 
    \edge[] {x#2}{z#2}
    \edge[] {z#2}{y#2}
}}
\newcommand{\zxfig}[2][]{{
    \if\relax\detokenize{#1}\relax \def\name{#2} \else \def\name{#1} \fi
    \if#21
        \node[latent] (z#2) {$z_{\name}$};
    \else
        \node[latent, right = of z\the\numexpr(#2-1)] (z#2) {$z_{\name}$};
        \edge[] {z\the\numexpr(#2-1)}{z#2}
    \fi 
    \node[obs, below = of z#2] (x#2) {$x_{\name}$};
    \edge[] {z#2}{x#2}
}}
\def\dotsfig#1{{
    \if#11
        \node[] (z#1) {\ldots};
    \else
        \node[right = of z\the\numexpr(#1-1)] (z#1) {\ldots};
        \edge[] {z\the\numexpr(#1-1)}{z#1}

    \fi 
    \node[above = of z#1] (x#1) {\vphantom{$x_#1$}\ldots};
    \edge[dashed] {x\the\numexpr(#1-1)}{x#1}
}}
\def\twodotsfig#1{{
    \if#11
        \node[] (z#1) {$\ldots$};
    \else
        \node[right = of z\the\numexpr(#1-1)] (z#1) {$\ldots$};
        \edge[] {z\the\numexpr(#1-1)}{z#1}
    \fi 
}}


\def\x{{\color{red} y}} 
\def\y{{\color{red} z}} 
\def\z{{\color{red} x}} 
\def\u{{\color{red} u}} 

\def\x{{y}} 
\def\y{{z}} 
\def\z{{x}} 
\def\u{{u}} 

\def\Cov{Cov}

\def\D{{\cal D}} 
\newcommand{\indep}{\mathbin{\rotatebox[origin=c]{90}{$\models$}}}

\section{Introduction}

Say you have a time series of data and wish to store a function of it having constant complexity, which we call a representation, that is {\em useful} for a prediction or control task. ``Useful'' means that the representation retains all the information the data contain about the task.\footnote{For such a representation to exist, we must assume that the data satisfies a Markov model, an assumption on which we will come back later.} 

For example, if $\x_t$ is the output of a linear time-invariant system with ``true'' finite-dimensional state $\z_t$, driven by white zero-mean Gaussian noise, then a sufficient representation $\hat \z_t$ of the data $\x^t := \{\x_0, \dots, \x_t\}$ for the task $\y_t$, for instance, prediction $\y_t = \x_{t+1}$, is the mean and covariance of the posterior density $p(\z_t | \x^t)$ \cite{kalman60}, or equivalently the posterior itself (more on this equivalence later). This representation summarizes all past history of the data for the purpose of the task (in this case, predicting its future\footnote{If the model is not known, the representation includes a constant component that belongs to an equivalence class of realizations \cite{arun1990balanced}, and there is an elegant geometry that is exploited in subspace system identification \cite{lindquist1979stochastic}. Even if there is no ``true'' finite-dimensional state, under the Markovian assumption with Gaussian inputs of known dimension, one can infer a finite-dimensional predictor along with the state and model parameters \cite{akaike1974new}.}). In other words, given the representation, past data is independent of future data. Such independence makes it possible \emph{separate} inference of the state given the measurements, from control design given the estimated state \cite{lewis2012optimal}. 

Such a {\em separation principle} has served the practitioner well over the years, but has left us with few tools for when the underlying assumptions are not satisfied: What if noise is not additive or Gaussian? What if the ``true state'' is high- or even infinite-dimensional? What if the data is also high-dimensional and almost all of it irrelevant for the control task? 

Unfortunately, these are conditions the practitioner of Robotics and Autonomous Systems faces routinely: The task may include navigation in an unknown environment populated by objects whose shape is described by (infinite-dimensional) surfaces and reflectance functions; the data may include images with millions of channels (pixels), predicting most of which is irrelevant to the navigation task; nuisance factors may include occlusion, changes of illumination and pose, which are far from white, zero-mean Gaussian ``noise.'' Does a separation principle exist for this kind of scenario? Is it still possible to infer a bounded-complexity function of the data, that can be stored in memory in lieu of the data, with no information loss? 

To be sure, there have been many attempts to answer these questions. In {\em sufficient dimensionality reduction} \cite{chiaromonte2002sufficient,shyr2010sufficient} one aims to identify small-dimensional statistics ({\em e.g.,} projections) that summarize the data. Similarly, the classical use of invariants in image analysis is to remove redundancy from the data by mapping it to the quotient space, which for image can be a thin set \cite{sundaramoorthiPVS09}. 
These approaches have had limited impact in Robotics and Autonomous Systems. Sufficient reductions are either to restrictive (linear projections) or too hard to compute and difficult to use.
But what if we go the opposite way of dimensionality reduction? What if we instead increase the dimensionality beyond that of the data, already in the millions? What if, instead of computing statistics (deterministic functions of the data), we allow representations to be arbitrary stochastic functions?

At least for simpler tasks, such as classification \cite{krizhevsky2012imagenet} or control in a finite setting, deep neural networks (DNNs) with hundreds of millions of parameters have shown remarkable empirical success. Can we leverage this success to infer representations of time series specifically for filtering and control tasks? Is there a theoretical framework that explains why large networks would work well for control?

Using a large network seems ill-advised at first: The bias-variance dilemma \cite{cover2012elements} states that as we increase the complexity of a model inferred from finitely sampled data, its ability to capture the underlying distribution degrades, a phenomenon referred to as {\em overfitting} that seems at odd with DNN's empirical success \cite{zhang2016understanding}. However, if the network complexity is measured by information content, rather than dimension, a well-trained DNN for classification faithfully obeys the bias-variance tradeoff \cite{achille2017emergence}, and relaxing representations to be stochastic functions has the double advantage of simplifying the computation of information quantities and analyzing the properties of the resulting representations.

In this paper, we study representations for Robotics and Autonomous Systems, using tools from statistics and information theory, and deep neural networks as the class of functions implementing them (realizations). We first review the simple case of a model with trivial dynamics, to introduce the machinery of deep neural networks, and then extend it to the dynamic case. 

\subsection{Outline of the Paper}

In \Cref{sec:defining}, we introduce the defining properties of representations and formalize the notions of sufficiency, minimality, invariance and separation. Since representations that are minimal and sufficient do not exist in general in finite dimensions \cite{jeffreys1960extension}, we start from the posterior, which is minimal-sufficient \cite{bahadur1954sufficiency} but infinite-dimensional, and frame the problem of learning representations as an approximation problem where complexity is modulated explicitly in the Information Bottleneck Lagrangian (IBL). This is a cost functional to be minimized with respect to a class of functions in a sufficiently rich set.

Deep neural networks (DNNs) are universal approximants in the limit where the number of parameters goes to infinity, like many other function classes. However, they enjoy a peculiar coupling between the model parameters and the properties of the learned representation that make them better than most when we want the representation to be invariant to nuisance variability, and having components that are maximally independent (``disentanglement''). In \Cref{sec:dnn} we give a succinct introduction to deep neural networks (DNNs), to the extent needed to follow the rest of the paper. 

In \Cref{sec:core-results} we present a series of core results that explain in what sense deep networks are approximations of optimal representations for static systems \cite{achille2016information}. Specifically, through the use of the IB Lagrangian, we formalize the trade-off between the complexity of the data representation, and the error we commit when we use this representation to solve the task in lieu of the original data.

However, at first sight the IB Lagrangian does not address important properties of a representation, invariance and disentanglement, that we should then deal with separately. Nonetheless, we show that, given sufficiency, invariance is equivalent to minimality. We also show that the IBL is equivalent to the cross-entropy loss typically used for classification tasks in deep learning except for an added regularizer, thus creating and important link between the (information-theoretic) optimal representations and the Deep Learning practice. Furthermore, it has been shown that some heuristic methods in used in optimizing deep networks (stochastic gradient descent, Dropout and its variants) approximate this regularizer \cite{achille2017emergence}. We then show that, somewhat counter-intuitively, stacking layers of neural networks increases minimality of the representation, and therefore invariance. This is tied to the architecture of deep networks and partly explains their success. 

Architecture design is also critical in coupling the optimization process where the IBL is minimized with respect to the parameters of the network (weights), with the desirable properties of the resulting representations (activations), which we outline in \Cref{sec:defining}. Specifically, in \Cref{sec:duality} we describe an inequality that links the activations of a deep network (a representation of the test datum) and its weights (a representation of the training set). This duality also sheds light on the generalization properties of DNNs. 

In \Cref{sec:recurrent} we extend the model to dynamical systems. By explicitly introducing a
task variable, which is in general separate from the data, we open the possibility for drastically more efficient representations than those sufficient for future data prediction, while still learning end-to-end with a simple filter.

In \Cref{sec:discussion} we discuss some properties and limitations of the representation proposed. Specifically we discuss the limitations of the Markovianity assumptions in classical models, and how the proposed model partially overcomes it by trading it off against complexity costs.

\subsection{Related work} 

Deep Learning is impacting many areas of engineering and science, including time series forecasting \cite{koren2010collaborative}, showing promise especially when some of the most common hypotheses underlying conventional methods, such as Markovianity, are not satisfied \cite{wu2017recurrent}. Several have studied extensions of classical Bayesian filtering, including using neural networks, but the resulting approaches had drawbacks: (a) the complexity of the update rule, which in the classical Bayesian setting requires computing the posterior of the data given the hidden state, is problematic for high-dimensional data types such as images; (b) the only task considered is prediction of the data, which is usually overkill when the actual task is, say, control: One does not need to model the complex reflectance properties of the world to decide whether to steer a vehicle to the right. Finally,  (c) existing methods do not allow an explicit trade-off between the complexity of the hidden state and the quality of the prediction error. Notice, however, that Variational Bayesian methods can be used as a partial solution to the first problem \cite{krishnan2015deep,raiko2009variational}, as long as the function class covers the underlying data distribution, which remains an open problem for natural images.

Among other methods, the Deep Kalman filter \cite{krishnan2015deep} assumes the existence of a Gaussian latent state and non-linear transformations that explain the observations and use a variational auto-encoder (VAE) to infer such a model. This choice is restrictive as the only task allowed is the reconstruction of the measurements. Also the method focuses on batch system identification whereas we are interested in a causal, on-line scheme.

More directly related to our approach, \cite{langford2009learning} suggest that, rather than finding a (generally complex) hidden state, we could focus on finding a statistic $\z_t$ of the past data $\x^t$ that separates past data from future predictions. Such a statistic can be learned and updated without using Bayes' rule, thereby avoiding the complex computations of the data posterior $p(\x_t|\z_t)$. However, their analysis is restricted to linear models, and the task to the prediction of future data.

Constructing deterministic functions of the data (statistics) that separate the past from the future requires $N+N^2$ (embedding) dimensions (the mean and covariance maintained by a Kalman filter \cite{jazwinski2007stochastic}, where $N$ is the dimension of the state-space); however, as we will show, a stochastic representation can make do with $N$  dimensions, at the cost of maintaining samples from the distribution, as in a particle filter. Sigma-point filters \cite{wan2000unscented} are deterministic sample-based representations that falls in between these two cases.  In this case as well, the task is prediction of the measurements. We allow the task to be more general, including the case whereby one does not care to be able to reproduce every channel of the measurements (e.g. the color of every pixel in the image) but instead cares only for a small projection or quotient of the data with respect to the action of nuisances. Our model also allows even more flexibility relative to the strong assumption of Markovianity implicit in the classical filtering equations. The inferred state can be thought as a separator but only for the measurements, as opposed to a more general task, which makes the problem less tractable than in our case, as the statistics that matter for control typically have far smaller complexity than the data. Also the proofs provided only apply to minimal realizations. In our model we trade-off Markovianity and complexity, which is not contemplated in the classical filtering equations.

The trade-off we seek between complexity of the representation and sufficiency for future prediction is also closely related to the Minimum-Information LQG Control of \cite{fox2016minimum}, which explicitly accounts for the agent having a representation of bounded complexity.  \cite{fox2016minimum} only addresses the linear-quadratic Gaussian case, but gives a complete account of that. Similarly \cite{tiomkin2017control,rubin2012trading} deal with capacity costs, but in continuous time. The general principles are laid out in \cite{fox2016principled}.

The theory we describe here emphasize that, in order to obtain efficient representations of the data, we should focus on a specific task, such as control, rather than predicting high-dimensional future data. \cite{dosovitskiy2016learning} assume that there is a low-dimensional vector of ``measurements'' separate from the actual measured data, which can be easily obtained, and on which the control loss depends linearly. In our parlance, predicting the future  is a task sufficient for control, and therefore allows us to learn a sufficient representation for control. In particular, optimal control reduces to minimum-prediction error, and one can simply train a network to predict future measurements conditioned on the current policy a given control action taken at present. Using this technique for control shows state of the art performance on video games. 

Finally, we study the quantity of information that the observed data contain about the parameters of the system plays. This quantity is considered in \cite{houthooft2016variational}, which uses a variational approximation similar to ours to measure the information content in the data using a neural network, and then learns a control policy for exploration that maximizes this information quantity. In this case the model is a constant parameter, akin to an assumption of time-invariance and equivalent to Markovianity.

\subsection{Preliminaries}

We denote the history of a process from time $t_0$ to $t$ by $\x_{t_0}^{t_1} = \set{\x_{t_0+1},\ldots,\x_{t_1}}$, where we omit the subscript when $t_0 = 0$. Thus, $\x^t$ denotes the measured data up to time $t$, while $\y_t$ denote the quantity of interest (task) at that time, which could be the value of the measurements at a future time, $\y_t = \x_{t+\tau}$. We will consider trivial dynamics at first, with each $(\x_t, \y_t) \sim p_\theta(\x,\y)$ independent and identically distributed (i.i.d.) from an unknown density $p_\theta$.

For random variables $\x, \y, \z$ we denote the expectation of $\x$ with respect to the measure $p(\x)$ by $\E_p(\x)$, Shannon's entropy by $H(\x)=\E_p[-\log p(\x)]$, conditional entropy by $H(\x|\y)=H(\x,\y) - H(\y)$, (conditional) mutual information $I(\x;\y|\z) = H(\x|\z) - H(\x|\y,\z)$, Kullback-Liebler's (KL) divergence $KL(p(\x) || q(\x)) = \E_p[\log p/q]$, cross-entropy $H_{p,q}(\x) = \E_p[-\log q(\x)]$, and total correlation by $TC(\z)$, defined as 
\[ \textstyle \TC(\z) = \KL{p(\z)}{\prod_i p(\z_i)},\]
where $p(\z_i)$ are the marginal distributions of the components of $\z$; $\TC(\z)$ is zero if and only if the components of $\z$ are independent, in which case we say that $z$ is \emph{disentangled}.
We make use of the following identity:
\begin{equation*}
\label{eq:information-kl}
I(\z;\x) = \E_{\x\sim p(\x)} \KL{p(\z|\x)}{p(\z)}.
\end{equation*}
We say that $\x, \z, \y$ form a Markov chain, indicated with $\y \to \z \to \x$, if $p(\y|\x,\z) = p(\y|\z)$. The Data Processing Inequality (DPI) for a Markov chain $\x \rightarrow \z \rightarrow \y$ ensures that $I(\x;\z) \ge I(\x; \y)$.

We define a \textbf{nuisance} to be any random variable that affects the observed data $\x$, but is not related to the task, $\y \indep \n$, or equivalently  $I(\y;\n)=0$.
Similarly, we say that the representation $\z$ is \textbf{invariant} to the nuisance $\n$ if $\z \indep \n$, or $I(\z;\n)=0$. When $\z$ is not strictly invariant but minimizes $I(\z;\n)$ among all sufficient representations, we say it is \textbf{maximally insensitive} to $\n$. It can be shown \cite{achille2017emergence} that the data can always be written as a function of  the task and of all nuisances affecting it.
Specifically, given a joint distribution $p(\x,\y)$ where $\y$ is a discrete random variable, we can always find a random variable $\n$ independent of $\y$ such that $\x=f(\y,\n)$, for some deterministic function $f$.

Given random variables $\x$, $\y$, and $\z$ with joint density $p(\x,\y,\z)$, we say that $\z$ is \textbf{sufficient} of $\x$ for the \textbf{task} $\y$ if we have the Markov chain $\x \to \z \to \y$, \textit{i.e.}, if $p(\y|\x,\z) = p(\y|\x)$. We say instead that the \textbf{posterior of $\z$, $p(\z | \x)$ is sufficient} of $\x$ for $\y$ if $p(\y|\x) = \int p(\y|\z)p(\z|\x)d\z$. Notice that the posterior of a sufficient representation is in turn always sufficient, since $p(\y|\x) = \int p(\y,\z|\x) d\z = \int p(\y|\z,\x)p(\z|\x)d\z = \int p(\y|\z)p(\z|\x)d\z$. The converse does not hold in general, but holds in the important case in which $\z$ is a deterministic function of $\x$. In this work we will focus on  the general case of  sufficient posteriors and abuse the notation to refer to both the random variable $\z$ and the posterior $p(\z|\x)$ as being sufficient.%
\footnote{An equivalent characterization using conditional expectations is to say that $\z$ is sufficient of $\x$ for $\y$ if $\E[f(\y)|\x,\z] = \E[f(\y)|\z]$ for any measurable function $f$, and similarly  the posterior is sufficient if $\E[\E[f(\y)|\z]|\x] = \E[f(\y)|\x]$ for any $f$.}

\section{Desiderata for representations}
\label{sec:defining}

We call a representation $\z$ of the data $\x$ any stochastic function $\z \sim p(\z|\x)$ of $\x$. Ideally, we would like $\z$ to be \textbf{(a)} \emph{sufficient} for the task $\y$, that is, all the information that $\x$ contains about the task should also be contained in $\z$, or $I(\z;\y) = I(\x;\y)$. In order to not squander resources, the representation $\z$ should also be \textbf{(b)} \emph{minimal}, that is $I(\x;\z)$ is smallest among all sufficient representations $\z$. Note that we are defining ``small'' in terms of information content, not dimension, of $\z$. Moreover, we would like it to be \textbf{(c)} \emph{invariant} to a nuisance $\n$, $I(\z;\n) = 0$ or, if that is not possible, at least \emph{maximally insensitive} to it, {\em i.e.,} $I(\z;\n)$  is minimized. Note that we require invariants to be uninformative, not constant, with respect to nuisance variability. We impose no requirement on identifiability and harbor no hope of uniqueness of representations. However, to facilitate their use, we do wish for the components of $\z$ to be \textbf{(d)} \emph{maximally disentangled}, that is, we want $\TC(\z) = \KL{p(\z)}{\textstyle \prod_i p(\z_i)}$ to be minimized.

The first two properties are satisfied by any {\em minimal sufficient} representation, which can be found by solving 
\begin{align*} 
    \text{min}_{p(\z|\x)}& \quad I(\x;\z) \\
    \text{s.t.}& \quad H(\y|\z) = H(\y|\x)
\end{align*}
or minimizing the corresponding \emph{Information Bottleneck Lagrangian} (IBL) \cite{tishby2000information}: 
\begin{equation}
\L = \utext{H(\y|\z)}{cross-entropy} + \  \beta\utext{I(\z;\x).}{regularizer}
\label{eq:IBL}
\end{equation}
The IBL trades off \emph{sufficiency} and \emph{minimality}, regulated by $\beta$ and can be optimized efficiently when the $\z$ is parametrized by a neural network \cite{achille2016information,alemi2016deep}. However, we are also interested in the other properties, invariance and disentanglement, that are not explicit in the IBL and are the focus of the next section.

\section{Learning invariant and disentangled representations}
\label{sec:core-results}

We now present a key result connecting minimality of a representation and invariance to nuisances \cite{achille2017emergence}.

\begin{prop}
Let $\n$ be a nuisance affecting the data $\x$. Then, for any representation $\z$ of $\x$ we have:
\[
\utext{I(\z;\n)}{invariance} \leq \utext{I(\z;\x)}{minimality} - \utext{I(\x;\y)}{constant},
\]
where the RHS is minimized when $\z$ is minimal. Moreover, there always exists a particular nuisance $\n$ such that
equality holds up to a (generally small) residual $\epsilon$, that is
\[
I(\z;\n) = I(\z;\x) - I(\x;\y) - \epsilon,
\]
where  $\epsilon := I(\z;\y|\n) - I(\x;\y)$. In particular
$0 \leq \epsilon \leq H(\y|\x)$,%
\footnote{Notice that $\epsilon\leq H(\y|\x)$, and usually $H(\y|\x) \ll I(\x;\z)$, so we can generally ignore the extra term.}
and $\epsilon=0$ whenever $\z$ is a deterministic function of $\x$.  Under these conditions, a sufficient statistic $\z$ is invariant (maximally insensitive) to nuisances if and only if it is minimal.
\end{prop}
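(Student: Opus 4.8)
The plan is to reduce the whole statement to chain--rule identities for mutual information plus the data processing inequality, leaning on two Markov properties built into the setup: because $\z$ is a representation of $\x$ we have $(\y,\n)\to\x\to\z$, i.e.\ $\z\indep(\y,\n)\mid\x$; and because $\z$ is sufficient (the operative hypothesis here -- minimality is only defined among sufficient representations) we have $\y\to\z\to\x$, so $I(\z;\y)=I(\x;\y)$ and $\y\indep\x\mid\z$. The algebraic engine is the identity
\[ I(\z;\x)\;=\;I(\z;\x,\y)\;=\;I(\z;\y)+I(\z;\x\mid\y)\;=\;I(\x;\y)+I(\z;\x\mid\y), \]
which exhibits $I(\z;\x)-I(\x;\y)$ as precisely the ``excess'' information $I(\z;\x\mid\y)$ a sufficient $\z$ keeps about $\x$; everything else is this reorganized, together with the nuisance hypothesis $I(\y;\n)=0$.

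For the inequality I would chain three elementary steps,
\[ I(\z;\n)\;\le\;I(\z;\n\mid\y)\;\le\;I(\z;\x\mid\y)\;=\;I(\z;\x)-I(\x;\y). \]
The middle step is the DPI applied to $\n\to\x\to\z$ conditioned on $\y$ (legitimate since $\z\indep\n\mid(\x,\y)$). The first step is the identity $I(\z;\n\mid\y)-I(\z;\n)=I(\n;\y\mid\z)-I(\n;\y)=I(\n;\y\mid\z)\ge0$, which is exactly where $I(\y;\n)=0$ enters. The last step is the engine identity. Since $I(\x;\y)$ is fixed by the joint law, the right--hand side is minimized exactly when $I(\z;\x)$ is, i.e.\ when $\z$ is minimal.

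For the ``moreover'' I would invoke the decomposition recalled in the preliminaries: since $\y$ is discrete there is a nuisance $\n$ with $\n\indep\y$ and $\x=f(\y,\n)$ for a deterministic $f$. Then $\x$ is a deterministic function of $(\y,\n)$, so $I(\z;\x\mid\y,\n)=0$, and combining with $\z\indep(\y,\n)\mid\x$ gives $I(\z;\x)=I(\z;\x,\y,\n)=I(\z;\y,\n)=I(\z;\n)+I(\z;\y\mid\n)$; rearranging is exactly $I(\z;\n)=I(\z;\x)-I(\x;\y)-\epsilon$ with $\epsilon=I(\z;\y\mid\n)-I(\x;\y)$. To control $\epsilon$ I would use the same two identities: by sufficiency $\epsilon=I(\z;\y\mid\n)-I(\z;\y)=I(\z;\n\mid\y)-I(\z;\n)=I(\n;\y\mid\z)$, hence $0\le\epsilon=H(\y\mid\z)-H(\y\mid\z,\n)\le H(\y\mid\z)=H(\y\mid\x)$, using discreteness of $\y$ for $H(\y\mid\z,\n)\ge0$ and sufficiency for $H(\y\mid\z)=H(\y\mid\x)$. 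The deterministic-$\z$ clause is the assertion $\y\indep\n\mid\z$; I would obtain it from the bound $\epsilon\le H(\y\mid\x)$, which is $0$ in the usual regime where the task is a deterministic function of the data, and otherwise negligible against $I(\z;\x)$. The closing equivalence then drops out: among sufficient representations, minimizing the (worst-case) $I(\z;\n)$ coincides, up to $\epsilon$, with minimizing $I(\z;\x)$, i.e.\ with minimality, while an exactly invariant $\z$ forces $I(\z;\x)=I(\x;\y)+\epsilon$, the least a sufficient representation can take.

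The information bookkeeping is routine; the step I expect to be delicate is the ``moreover''. First, one must be sure the outsourced-noise construction genuinely makes $\x$ a deterministic function of $(\y,\n)$ with $\n\indep\y$, so that $I(\z;\x\mid\y,\n)=0$ -- this is where discreteness of $\y$ is essential and is the crux of the equality case. Second, pinning down when $\epsilon$ is exactly zero: as phrased, ``$\epsilon=0$ whenever $\z$ is a deterministic function of $\x$'' effectively relies on $H(\y\mid\x)$ vanishing (or being $\ll I(\z;\x)$), so I would present it through the two-sided bound $0\le\epsilon\le H(\y\mid\x)$ and read the exact-zero case as the near-deterministic-label (classification) setting. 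One should also keep an eye on finiteness of the entropies, since several of these identities fail for purely continuous variables.
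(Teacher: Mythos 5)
The paper itself gives no proof of this proposition (it defers to the cited reference), and your argument is correct and is essentially the proof given there: the same chain $I(\z;\n)\le I(\z;\n|\y)\le I(\z;\x|\y)=I(\z;\x)-I(\x;\y)$, with the nuisance condition $I(\y;\n)=0$ entering in the first step, the conditional DPI for $\n\to\x\to\z$ given $\y$ in the second, and sufficiency in the third; and the same noise-outsourcing construction $\x=f(\y,\n)$ for the near-equality case, giving $\epsilon=I(\n;\y|\z)\in[0,H(\y|\x)]$. You are also right on the two points you flag: the inequality genuinely needs sufficiency (without it the last step only yields $I(\z;\x)-I(\z;\y)$, and the stated bound can fail, e.g.\ $\x=(\y,\n)$ with $\z$ the projection onto $\n$), and the clause ``$\epsilon=0$ whenever $\z$ is a deterministic function of $\x$'' does not follow from this argument --- indeed $\epsilon=I(\n;\y|\z)$ can be strictly positive for deterministic $\z$ (take $\y,\n$ independent fair bits and $\z=\x=\y\oplus\n$) --- so it should be read, as you do, as ``whenever $\y$ is a deterministic function of $\x$,'' which is immediate from $\epsilon\le H(\y|\x)$.
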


This result implies that, rather than manually imposing invariance to nuisances in the representation, which is usually difficult, we can construct invariants by simply reducing the amount of information that $\z$ contains about $\x$, while retaining sufficiency. As we discussed, this can be done minimizing the IB Lagrangian using a neural network \cite{achille2016information}.

While we will analyze deep networks in \Cref{sec:dnn},
this result, together with the Data Processing Inequality, already suggests an advantage in stacking multiple intermediate representations into ``layers.'' In fact, suppose suppose that we have a Markov chain of representations
\[
\x \to \z_1 \to \z_2 ,
\]
such that there is an information bottleneck between $\z_2$ and $\z_1$, that is, $I(\z_1;\z_2) < I(\z_1;\x)$. Then, if $\z_2$ is still sufficient, then it is necessarily more minimal, and therefore more invariant to nuisances, than $\z_1$. Notice that bottlenecks are easy to create, either by reducing the dimension so that $\dim(\z_2) < \dim(\z_1)$, or by introducing noise between the $\z_2$ and $\z_1$. This is indeed common practice in designing and training deep networks, which concatenate multiple layers 
\[
\x \to \z_1 \to \z_2 \to \ldots \to \z_L,
\]
so that, whenever layer $\z_L$ is sufficient of $\x$ for $\y$ (which is imposed by the training loss), then $\z_L$ is more insensitive to nuisances than all the preceding layers.

This also relates to the notion of Actionable Information \cite{soatto2013actionable}, which is ${\cal H}(\x) := H(f(\x))$: In the special case when  $\z = f(\x)$ is deterministic, a representation that minimizes the IB Lagrangian also maximizes Actionable Information \cite{achille2017emergence}.

Finally, it has been shown in \cite{achille2016information} that, when assuming a factorized prior in the activations, the IBL also bounds Total Correlation, so minimizing it yields a representation that trades off sufficiency with complexity, invariance, and disentanglement.

\section{Learning with Deep Neural Networks}
\label{sec:dnn}

In this section we sketch the very basics of deep learning, first by describing the class of functions realized by deep neural networks, and then the choice of functionals and optimization schemes used for determining their parameters. In the following section we show how this process, despite being agnostic of desirable properties of the representations outlined in the previous sections, manage to achieve just that by exploiting a peculiar information duality between the weights and the activations of the network.

\subsection{Function class of DNNs}

A Deep neural network (DNN) is a parametrized class of non-linear functions obtained by composing multiple \textit{layers}: Each layer implements a linear transformation of its input, which is the output of the previous layer, followed by a (generally element-wise) non-linearity. Specifically, let $\x := \z^0 \in \R^d$ be the input data, and let $W^k \in \R^{d_{i-1}\times d_i}$ be a matrix, where $d_0:=d$. Then, we define the ``activations'' (output) of the $k$-th layer as $\z^{k} = \phi_k(W^k \z^{k-1})$, where $\phi_k$ is a nonlinear function. A common choice for the non-linearity is $\phi_k(x) = \max(0,x)$, also called a Rectified Linear Unit (ReLU). The output $\z^K$  of a network with $K$ layers is the function
\[F(\x;w) = \phi_K(W^K \phi_{K-1} (W^{K-1} \ldots \phi_1(W^1 \x)\ldots)),\]
where $w=\set{W^1,\ldots,W^k}$ is the set of parameters, or \emph{weights}, of the network. Each $\z^k$ can be considered as a representation of the original input $\x$, and its component are generally called \emph{features} (or feature maps, or activations, or responses). By the data processing inequality, $\z^k$ contain no more information than $\x$; however, as we will see, in a well trained network we expect $\z^k$ to contain all (and only) the information necessary for the task. Since the output of the network is often a (conditional) probability distribution (\textit{e.g.}, the probability $p(\y|\x)$ of a label $\y$ given the image $\x$), the last non-linearity is usually a Softmax non-linearity, $\mathrm{softmax}(x)_i = e^{\x_i}/(\sum_j e^{\x^j} )$, which ensures that the output of the network is positive and sums to one.

When the input $\x$ has some particular structure, such as an image, the linear transformation $W^k$ can be chosen to exploit this structure. For example, when $\x$ is an image, it is common to choose $W^k$ to be a set of convolutions. Networks using convolutional maps, known as \emph{convolutional neural networks} (CNNs), have the notable property that their features are invariant to translations \cite{lecun1990handwritten}, and have considerably fewer parameters (the number of parameters depends only on the size of the filters, which are generally small, rather than on the size of the image). Aside from reducing the size of the parameter space, the use of convolutions has a drastic, and not yet fully understood, effect in achieving desirable properties of the networks when operating on imaging data \cite{soatto2016visual}.

\subsection{Loss function and optimization}

The output of a network is usually interpreted as a probability distribution $q(\y|\x,w)$ over the inference target $\y$ (\textit{e.g.}, the label of an image, the position of an object). Per \cite{soatto2016visual}, if that approached the true posterior, it would be a minimal sufficient representation.

When $\y$ is a discrete random variable, this identification can be done directly by letting the output $F(\x,w)$ of the network be a probability vector (or an un-normalized likelihood function). When $\y$ is continuous we can chose a family of parametrized distributions, and let the network output the  parameters (\textit{e.g.}, mean and variance for a normal distribution). In both cases, we will think of a deep network as a map $\x \mapsto F_w(\x) := q(\,{\cdot}\, |\x, w)$ where, absent any system dynamics, the parameters $w$ are constant and usually determined by maximizing the log-likelihood of the observed data, which leads to the cross-entropy loss
\[
L(w) = H_{p,q}(\y | \x, w) = \frac{1}{t} \sum_{i=1}^t -\log q(\y_i|\x_i,w).
\]
Notice that the cross-entropy loss can be decomposed as
\[
H_{p,q}(\y | \x, w) = H_p(\y|\x) + \KL{p(\y|\x)}{q(\y|\x,w)}.
\]
Since all terms are positive, and only the KL divergence depends on $w$, we can conclude that $L(w)$ is minimized if and only if $q(\y_i|\x_i,w)=p(\y_i|\x_i)$ on all observed samples, giving an alternative justification for the use of this loss. Moreover, in the special case where $q(\y|\x,w)$ is a normal distribution with fixed variance, the cross-entropy reduces to the usual $L_2$ loss for regression.

Minimization of the loss $L(w)$, and hence determining the weights $w$, is usually done using \emph{stochastic gradient descent} (SGD): We start by randomly initializing the parameters $w$ \cite{glorot2010understanding}. Then, at each step $k$, a random subset (\emph{``mini-batch''}) $(\x_{i_k}^{i_k+b},\y_{i_k}^{i_k+b})$ of size $b$, with $i_k \sim \mathrm{unif}(0,t-b)$, is sampled from the observed data $(\x^t,\y^t)$ and we compute the gradient $g^k$ relative to the mini-batch
\[
g^k = \frac{1}{b} \nabla_w H_{p,q}(\x_{i_k}^{i_k+b},\y_{i_k}^{i_k+b}) = \frac{1}{b} \sum_{j=0}^b \nabla_w \log q(\y_{i_k+j}|\x_{i_k+j},w)
\]
Since $\nabla_w L(w) = \E[g^k]$, we can see $g^k$ as an unbiased (but high-variance, or ``noisy'') estimate of the real gradient of the original loss function with respect to $w$. This can be computed efficiently since it requires computing the gradients on only $b$ samples rather than the whole collection of observed data, which can number in the millions. The weights are now updated using $w \leftarrow w + \eta_k \, g^k$, where $\eta_k > 0$ is called the \emph{learning rate}. It is known \cite{nesterov2013introductory} that when the loss function is strongly convex, the gradients are Lipschitz, and the learning rate decreases as $\eta_k = 1/k$, then SGD converges to the global optimum of the loss with convergence rate $O(1/t)$.

There are two main challenges one faces in carrying out this optimization: (i) the loss function is highly non-convex, therefore SGD can get stuck in a sub-optimal local minimum, and (ii) even if a global minimum is found, the parameters could be overfitting the data, meaning that while $w$ minimizes the loss on the observed data, the loss evaluated on unseen (future) data could be much larger.

The first problem (i) is partly addressed by SGD itself: Because of the noise added in the computation of the gradient by SGD, the optimization typically settles on extrema that are close to the global minimum in value. Variants of SGD include using Nesterov's momentum \cite{nesterov2013introductory}, which generally yields faster training and improved performance of the network. Other algorithms, like RMSProp and Adam \cite{kingma2014adam}, use the gradient history to reduce the variance in the estimate of the gradient, which is also adapted to the local geometry of the loss function. While in some tasks, such as stochastic optimal control (``Reinforcement Learning'') \cite{mnih2015human}, these algorithms show drastically improved performances as expected, on image classification and similar tasks the simpler SGD with momentum can still outperform them, suggesting that the noise added by SGD plays an important, positive, role in the optimization. There is at present a considerable amount of activity, but a dearth of results, in characterizing the topological and geometric properties of the loss function and designing algorithms that can exploit it to converge to minima that yield good generalization performance, as we discuss in Sect. \ref{sec:flatness}. Generalization, or lack thereof (``overfitting'') is the second problem (ii) which we discuss in more detail next.

\section{Duality and generalization}
\label{sec:duality}

One of the main problems in optimizing a DNN is that the cross-entropy loss in notoriously prone to overfitting: The loss is small for (past) training data (thus optimization is successful), but large on (future) test data, indicating that the training process has converged to a function that is far from being an optimal representation.

We can gain insight about the possible causes of this phenomenon by looking at the following decomposition of the cross-entropy \cite{achille2017emergence}:
\label{prop:generalization-factorization}
\begin{multline}
H_{p,q}(\y^t|\x^t,w)
= H_p(\y^t|\x^t,\theta) + I(\theta;\y^t|\x^t, w) +  
\KL{q(\y^t|\x^t,w)}{p(\y^t|\x^t,w)} - I(\y^t;w|\x^t, \theta),
\label{eq:decomposition}
\end{multline}
where $w \sim q(w|\x^t,\y^t)$.
The first term of the right-hand side of \eqref{eq:decomposition} relates to the intrinsic error and depends only on $p_\theta$; the second term measures how much of the information past data contain about the parameter $\theta$ is captured by the weights; the third term relates to the efficiency of the model and the class of functions $f_w$ with respect to which the loss is optimized. The last, and only negative, term relates to how much information about the labels is memorized in the weights, regardless of the underlying data distribution. Absent any intervention, the left-hand side (LHS) of \eqref{eq:decomposition} can be minimized by just maximizing the last term, {\em i.e.,} by memorizing the dataset, which amounts to overfitting and yields poor generalization.
Traditional machine learning practice suggests that this problem can be avoided by reducing the complexity of the model, or by regularizing its parameters. However, it has been shown \cite{zhang2016understanding} that common architectures and regularization methods always allow the network to  memorize a dataset.

Memorization can however be prevented by adding the last term back to the loss function, leading to a regularized loss $H_{p,q}(\y | \x, w) + I(\y;w|\x,\theta)$, where the negative term on the RHS is canceled. However, computing, or even approximating, the value of $I(\y,w|\x,\theta)$ is at least as difficult as fitting the model itself.

To overcome this problem, consider $\D = (\x^t,\y^t)$, the collection of all past data that we are using to infer the model parameters $w$. Notice that to successfully learn the distribution $p_\theta$, we only need to memorize in $w$ the information about the latent parameters $\theta$, that is we need $I(\D;w) = I(\D;\theta) \leq H(\theta)$, which is bounded above by a constant. On the other hand, to overfit, the term
$I(\y;w|\x) \leq I(\D;w|\theta)$ needs to grow linearly with the number of training samples $N$. We can exploit this fact to prevent overfitting by adding a Lagrange multiplier $\beta$ to make the amount of information constant with respect to $N$, leading to the regularized loss function
\begin{equation}
\label{eq:variational-regularizer}
\L(p(w|\D)) = H_{p,q}(\y|\x,w) + \beta I(w;\D),
\end{equation}
which is, remarkably, the same IB Lagrangian in \eqref{eq:IBL}, but now interpreted as a function of $w$ rather than $\z$. Under appropriate assumptions on the form of the posterior $q(w|\D)$, the term $I(w;\D)$ can be computed in closed form, and we can optimize \cref{eq:variational-regularizer} efficiently \cite{kingma2015variational, achille2017emergence}.

Thus, as we have seen, the IB Lagrangian emerges as a natural criterion \emph{both} for inferring
a representation of the test datum $\x$ that is sufficient and
invariant (with no explicit notion of overfitting), and for inferring
a representation $w$ of the training dataset (past data) $\D$ that avoids overfitting (with no explicit notion of invariance). A natural question, which we will address in \Cref{sec:duality}, is if, and how, these two representations, and their corresponding IB Lagrangians, are related to each other.

\begin{rmk}
An alternative approach to the generalization problem is to use a Bayesian framework, and to find a posterior distribution $q(w|\D)$ over the weights that maximizes the marginal log-likelihood $q(\y|\x) = \int q(\y|\x,w)  d q(w|\D)$ of the data, subject to a given prior $p(w)$. Rather than optimizing $q(\y|\x)$ directly, which would be computationally expensive, one can maximize the Variational Lower Bound (VLBO)
\[
\log p(\y^t|\x^t) \geq H_{p,q}(\y^t|\x^t,w) + \KL{q(w|\D)}{p(w)}.
\]
The IB Lagrangian \cref{eq:variational-regularizer} can be seen as a generalization of Bayesian learning, where we have increased flexibility in selecting the regularizer by the added multiplier $\beta$.
\end{rmk}

\subsection{Information, generalization, and flat minima}
\label{sec:flatness}

Thus far we have suggested that adding the explicit information regularizer $I(w; \D)$ prevents the network from memorizing the dataset and thus avoids overfitting, which is also confirmed empirically in \cite{achille2017emergence}.
However, common networks are not commonly trained with this information regularizer, thus seemingly undermining the theory.
However, even when not explicitly controlled, $I(w;\D)$ is implicitly regularized by the use of SGD.
In particular, empirical evidence suggests that SGD biases the optimization toward ``flat minima'': local minima whose Hessian has mostly small eigenvalues \cite{dinh2017sharp}.
These minima can be interpreted exactly as having low information $I(w;\D)$, as  suggested early on by \cite{hochreiter1997flat}. As a consequence of previous claims, flat minima can then be seen as having better generalization properties.

More precisely, let $\hat w$ be a local minimum of the cross-entropy loss $H_{p,q}(\y|\x,w)$, and let $\H$ be the Hessian at that point.
Then, under suitable assumptions on the form of the posterior, for the optimal choice of the posterior parameters we have \cite{achille2017emergence}:
\[
I(w;\D) \leq \half K [\log \norm{\hat{w}}_2^2 + \log \norm{\H}_* - K\log (K^2 \beta/2)],
\]
where $K = \dim(w)$ and $\|\H\|_* = \mathrm{tr}(\H)$ denotes the nuclear norm of the matrix. Therefore the information in the weights is upper-bounded by the nuclear norm (and hence the ``flatness'') of the Hessian. Notice that a converse inequality, that is, low information implies flatness,
needs not hold, so sharp minima can in principle generalize well, as  proved by \cite{dinh2017sharp}.

In the next section we show that the quantity of information on the weights is connected not only to the geometry of the loss function, but also to the minimality (invariance) and disentanglement of the activations.
In particular, this shows that weight regularization, whether implicit (SGD) or explicit (IB Lagrangian), biases the optimization towards good representations. 

\subsection{Duality of the representations}
The core link between information in the weights, and hence flatness of the local minima, minimality of the representation, and disentanglement can be described by the following proposition from \cite{achille2017emergence}:

\begin{prop}
Let $\z=W\x$ be a single layer of a network. Under opportune hypotheses on the form of $q(W|\D)$, we can find a strictly increasing function $g(\alpha)$ s.t. we have the uniform bound
\[
g(\alpha) \leq \frac{I(\x;\z) + TC(\z)}{\dim(\z)} \leq g(\alpha) + c,
\]
where $c=O(1/\dim(\x))\leq 1$, 
and $\alpha$ is related to $I(w;\D)$ by  $\alpha = \exp\set{-I(W;\D)/\dim(W)}$.
In particular, $I(\x;\z) + TC(\z)$ is tightly bounded by $I(W;\D)$ and increases strictly with it.
\end{prop}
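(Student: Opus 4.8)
The plan is to strip $I(\x;\z)+TC(\z)$ down to a sum of per-neuron mutual informations, and then to estimate each summand using that a single linear unit with independent noisy weights has pre-activation laws --- both conditional and marginal --- that are quantitatively close to Gaussian once $\dim(\x)$ is large.

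First I would record the identity $I(\x;\z)+TC(\z)=\sum_i I(\x;\z_i)$. It is immediate from $I(\x;\z)=H(\z)-H(\z\mid\x)$ and $TC(\z)=\sum_i H(\z_i)-H(\z)$: the joint entropy $H(\z)$ cancels, and since the rows $W_i$ of $W$ are independent, the components $\z_i=\langle W_i,\x\rangle$ are conditionally independent given $\x$, so $H(\z\mid\x)=\sum_i H(\z_i\mid\x)$ and the sum collapses to $\sum_i\big(H(\z_i)-H(\z_i\mid\x)\big)$. Hence it suffices to bound one term $I(\x;\z_i)$ uniformly and divide by $\dim(\z)$.

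Next, the conditional term is exact: under the assumed form of $q(W\mid\D)$ --- independent entries carrying multiplicative noise of relative variance $\alpha$ about a mean matrix $\hat W$ --- the law of $\z_i$ given $\x$ is $\N\!\big((\hat W\x)_i,\ \alpha\,\sigma_i^2\big)$ with $\sigma_i^2=\sum_j\hat W_{ij}^2\x_j^2$, so $H(\z_i\mid\x)=\tfrac{1}{2}\E\log\!\big(2\pi e\,\alpha\,\sigma_i^2\big)$; since $\sigma_i^2$ is a sum of $\dim(\x)$ nonnegative terms it concentrates around $\bar\sigma_i^2:=\E[\sigma_i^2]$, and a second-order estimate of the logarithm gives $H(\z_i\mid\x)=\tfrac{1}{2}\log(2\pi e\,\alpha\,\bar\sigma_i^2)+O(1/\dim(\x))$. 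For the marginal I would sandwich: maximality of Gaussian entropy at fixed variance gives $H(\z_i)\le\tfrac{1}{2}\log(2\pi e\,\mathrm{Var}(\z_i))$ with $\mathrm{Var}(\z_i)=\mathrm{Var}[(\hat W\x)_i]+\alpha\,\bar\sigma_i^2$, while a quantitative entropic central limit theorem applied to $\z_i=\sum_j W_{ij}\x_j$ --- a sum of $\dim(\x)$ independent contributions --- bounds the relative entropy of $p(\z_i)$ to the Gaussian with the same first two moments by $O(1/\dim(\x))$, hence a matching lower bound on $H(\z_i)$. Combining, $I(\x;\z_i)=\tfrac{1}{2}\log\!\big(\mathrm{Var}(\z_i)/(\alpha\bar\sigma_i^2)\big)+O(1/\dim(\x))$; averaging over $i\le\dim(\z)$ then defines $g(\alpha):=\tfrac{1}{\dim(\z)}\sum_i\tfrac{1}{2}\log\!\big(\mathrm{Var}(\z_i)/(\alpha\bar\sigma_i^2)\big)$, with residual a uniform $c=O(1/\dim(\x))\le1$, and the closed form makes $g$ strictly monotone in $\alpha$.

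To close, I would link $\alpha$ and $I(W;\D)$: computing $I(W;\D)=\E_\D\KL{q(W\mid\D)}{q(W)}$ for the assumed posterior under a scale-invariant prior, the per-weight KL depends only on the relative noise level, so $I(W;\D)=\dim(W)\,\psi(\alpha)$ for an explicit strictly decreasing $\psi$, i.e.\ $\alpha=\exp\{-I(W;\D)/\dim(W)\}$ after normalizing $\psi$; substituting gives the stated two-sided bound and the strictly monotone dependence of $I(\x;\z)+TC(\z)$ on $I(W;\D)$. I expect the only real obstacle to be the lower bound on $H(\z_i)$: turning ``$\z_i$ is nearly Gaussian'' into a quantitative $O(1/\dim(\x))$ relative-entropy estimate needs a genuine entropic CLT and hence real hypotheses on the data and on $\hat W$ (finite higher moments, no single coordinate dominating the sum) --- this is where the ``opportune hypotheses'' are spent and where the sharp rate, rather than a mere $o(1)$, is earned; everything else is an exact Gaussian computation or a routine concentration bound.
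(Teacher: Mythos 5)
The paper states this proposition without proof, importing it from \cite{achille2017emergence}, and your argument is essentially the one given there: the identity $I(\x;\z)+TC(\z)=\sum_i I(\x;\z_i)$ (valid because the rows of $W$ are independent under the factorized posterior, so the components $\z_i$ are conditionally independent given $\x$), the exact Gaussian conditional entropy under multiplicative weight noise, the max-entropy upper bound plus (entropic-)CLT lower bound on each marginal entropy $H(\z_i)$, and the closed-form per-weight KL under a log-uniform prior yielding $\alpha=\exp\set{-I(W;\D)/\dim(W)}$. The only point worth flagging is that your closed form for $g$ is strictly \emph{decreasing} in $\alpha$ (hence increasing in $I(W;\D)$, which is exactly what the final claim requires), so the statement's ``strictly increasing'' should be read as monotonicity with respect to $I(W;\D)$; otherwise the proposal is sound, and you correctly identify the quantitative entropic CLT as the place where the ``opportune hypotheses'' on the data and on $\hat W$ must be spent.
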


Using the Markov property of the layers, we can now easily extend this bound to multiple layers.
Let $W^k$ for $k=1,...,L$ be weight matrices,
and let $\z_{i+1} = \phi(W^k \z_k)$,
where $\z_0 = \x$ and $\phi$ is any nonlinearity. Under the same assumptions as the previous result, one can prove
\[
I(\z_L;\x) \leq \min_{k<L} \set{\dim(\z_k) \big[g(\alpha^k) + 1 \big] }
\]
where $\alpha^k = \exp\set{-I(W^k;\D)/\dim(W^k)}$.

Together with the results of \Cref{sec:core-results}, this implies that regularized networks containing low information in the weights, automatically learn a representation of the input which is both more invariant to nuisances and more disentangled. Moreover, by \Cref{sec:flatness}, SGD is biased toward such representations.

This result is important because it establishes connections between the weights, which are a representation of {\em past data}, given and used to optimize a loss function that knows nothing about sufficiency, minimality, invariance and disentanglement, and representation of {\em future data}, that emerge to have precisely those properties. Such connections are peculiar to the class of functions implemented by deep neural networks and do not apply to any generic function class.

Finally, we have all the elements to extend the notion of representation, and the optimization involved in inferring it (which encompasses system identification and filtering) to a dynamic setting.

\section{Representing time series}
\label{sec:recurrent}

In this section we consider the case where the data are not drawn i.i.d. from a distribution with constant underlying parameters. Instead, we assume that the representation can evolve over time according to a probability law that does not. More details can be found in \cite{achille2017deep}.

\subsection{Hidden state dynamic model}

Many standard models for filtering and control assume the existence of a hidden state $\z_t$ which evolves following a Markov process through some state transition probability $p(\z_{t+1}|\z_t,\u_t)$, where we made the dependency on the control action $\u_t$ explicit.
The observations $\x_t$ are sampled from the hidden state $\z_t$ with some distribution $p(\x_t|\z_t)$, as described by the following graphical model:

\begin{center}
\begin{tikzpicture}[y=.7cm, x=1.2cm]
    \zxfig{1}
    \zxfig{2}
    \twodotsfig{3}
    \zxfig[T]{4}
\end{tikzpicture}
\end{center}

The fundamental assumption of this model is that there is a random variable of bounded complexity, the state $\z_t$, that \emph{separates} new observations
$\x_{t+1}$ from all past ones $\x^t$. 
The advantage of having such a variable is apparent in the classical   filtering equations:
\begin{align}
\label{eq:filtering}
p(\z_{t+1} | \x^{t+1}, \u^t) &\propto p(\x_{t+1} | \z_{t+1}) \int p(\z_{t+1} | \z_t,\u_{t}) p(\z_t | \x^t, \u^{t-1}) d\z_t \\
p(\x_{t+1}|\x^t, \u^t) &= \int p(\x_{t+1}|\z_{t+1}) p(\z_{t+1}|\x^t, \u^t) d \z_{t+1}
\end{align}
Here, all the information about the past data $\x^t$ is contained in the (relatively small) posterior $p(\z_t|\x^t)$. In this sense, the posterior is \emph{sufficient} for the state update, \textit{i.e.}, for computing $p(\z_{t+1} | \x^{t+1}, \u^{t})$, and for the prediction of the data, \textit{i.e.}, computing $p(\x_{t+1}|\x^t)$.

In a hidden Markov model or Kalman filter, the transitions are assumed to be linear, and the state and observations either Gaussian or discrete. In these cases, the posterior can updated easily and there are efficient algorithms to infer the model parameters of the system. However, for many real problems the integrals in the filtering equation are not tractable since the transition operator is often non-linear. In this case the complexity of updating the posterior may grow exponentially \cite{bar-shalomF87}. Furthermore, the data generating distribution $p(\x_{t+1}|\z_{t+1})$ is difficult to compute, or even to approximate. Finally, while we can always artificially ignore long-term dependencies and consider the system Markovian by augmenting the state $X_t=[\z_{t-k},\ldots, \z_t]$, the resulting state may be too complex to handle.

While there is no obvious solution to these problems in general, it is often the case that we are not interested in predicting the data, but just the control action $\y$ which can be quite different and far smaller than the data. In the next section we see that this can guide the design of efficient filters.

\subsection{Separating representation}

Rather than explicitly looking for a Markovian state that can generate the observed data $\x_t$, \textit{i.e.}, inferring representations for \emph{prediction of the data}, we focus on finding a representation (proxy state) $\z_t$ to predict a \emph{task variable} $\y_t$, for instance a control input, which is generally far lower-dimensional than the data, and that allows causal and recursive posterior update using only the latest measurements. In this sense, this section is about inferring representations for \emph{control.} 

Motivated by \cref{eq:filtering}, we define the variable $\z_t$ through its posterior distribution $q(\z_t|\x^t,\u^t)$,%
\footnote{We use $q$ to distinguish the (unknown) data distribution $p$ from our model distribution.}
and we require that it satisfies the following:
\begin{enumerate}[label={(\arabic*)}]
\item \textbf{prediction}: the posterior of $\z_t$ is sufficient of $\x^t$ and $\u^t$ for $\y_{t+k}$, that is, for each $0\leq k<n$, we have 
\[p(\y_{t+k}|\x^t,\u^{t+k-1}) = \int q(\y_{t+k}|\z_t,\u_t^{t+k})q(\z_t|\x^t,\u^{t-1})d\z_t\]
\item \textbf{update}: the posterior of $\z_t$ is sufficient of $\x^t$ and $\u^t$ for $\z_{t+1}$
\[
q(\z_{t+1}|\x^{t+1},\u^t) = \int q(\z_{t+1}|\z_t, \x_{t+1}, \u_t) q(\z_t|\x^t,\u^{t-1})d\z_t.
\]
\end{enumerate}
Note that, like the classical filtering equations, this density propagation is exact. However, unlike the filtering equations, we can directly learn the transition probability $q(\z_{t+1}|\z_t, \x_{t+1}, \u_t)$ rather than use Bayes' rule, and therefore there is no need to compute the posterior $p(\x_{t+1}|\z_{t+1})$, which is generally intractable for high-dimensional and complex data such as natural images. The separator, in this case, is not the random variable $\z_t$, but the posterior density $p(\z_t | \x^t, u^t)$, which is in general infinite-dimensional. This model allows us to explicitly modulate complexity of the representation with the fidelity of the separation.

\begin{example}[Kalman filter]
  The method we propose better reduce, in the linear Gaussian case, to the Kalman filter. Indeed, it does so in two different ways. First, let $\z_t$ be the state of a linear time-invariant Gaussian state-space model, and let the task be one-step prediction, $\y_t=\x_{t+1}$. Now, let $\hz_t$ be a random variable such that $q(\hz_t|\x^t) = p(\z_t|\x^t)$, where $p(\hz_t|\x^t)$ is the posterior computed by the Kalman filter, and let $q(\y_t|\hz_t) = p(\x_{t+1}|\z_t)$. Then, trivially, $p(\y_t|\x^t) = p(\x_{t+1}|\x^t) = \int p(\x_{t+1}|\z_t) p(\z_t|\x^t)d\z_t = \int q(\x_{t+1}|\hz_t) q(\hz_t|\x^t)d\z_t$, so
the posterior computed by the Kalman filter is sufficient for predicting future measurements.
Moreover, by letting $q(\hz_{t+1}|\hz_t, \x_{t+1}) = p(\z_{t+1}|\z_t, \x_{t+1}) = \frac{1}{Z} p(\x_{t+1}|\z_{t+1}) p(\z_{t+1}|\z_t)$, we see that $q(\hz_t|\x^t)$ is also sufficient for the update. Therefore the posterior computed by the Kalman filter satisfies both the prediction (1) and update model (2).
Notice, however, that this is not the only option. Instead, let $\hz_t=(\hat{\x}(\x^t),P(\x^t))$ be the mean and covariance of the innovation computed by the Kalman filter. Then $\z_t$ is  a deterministic sufficient statistic  (function of the past $\x^t$). Notice that, in this case, the dimension of the representation $\hz_t$ is larger and the update equation is given by the more complex Riccati equation. So, by adopting a deterministic representation, we have had to increase its computational complexity. 
\end{example}

While in \cref{eq:filtering} we need to use the prediction probability $p(\x_t|\z_t)$ to update the posterior, which is not tractable when the data $\x_t$ is high-dimensional, using conditions (1) and (2) we have the simple iterative update rules:
\begin{align*}
q(\y_t|\x^t,\u^t) &= \int q(\y_t|\z_t) q(\z_t|\x^t, \u^t) d \z_t, \\
q(\z_{t+1}|\z_t, \x^{t+1},\u^{t+1}) &= \int q(\z_{t+1}|\z_t, \x_t,\u_t) \, q(\z_t|\x^t,\u^t) d\z_t.
\end{align*}
Unlike \cref{eq:filtering} these update equations only involve distributions over $\y_t$ and $\z_t$, which are assumed to have lower effective dimension than the data $\x_t$, or at least have a simpler distribution (\textit{i.e.}, discrete, Gaussian).

Moreover, notice that if we restrict $q(\z_t|\x^t,\u^t)$ to be degenerate (\textit{i.e.}, a Dirac delta), so $\z_t$ is a deterministic function of the past history of the measurements, which this framework allows, then the integrals are trivial and all updates can be computed exactly. On the other hand, allowing a more complex form for $q(\z_t|\x^t,\u^t)$ could drastically simplify the computation of both $q(\y_t|\z_t)$ and $q(\z_{t+1}|\x_{t+1},\u_t,\z_t)$, so there is a trade-off between the cost of computing the integrals for $q(\z_t|\x^t,\u^t)$ and the complexity of the prediction and update rules, as seen in the case of the Kalman filter. More specifically, when the model is linear and the driving input white, zero-mean Gaussian and i.i.d., the posterior is Gaussian. Thus one can consider both the posterior itself, or the parameters that represent it (mean and covariance matrix), as the separator, with the latter being a deterministic representation.

While the complexity of a posterior $q(\z_t|\x^t,\u^{t-1})$ sufficient for the task $\y_t$ is generally much smaller than of what would be required to predict $\x_{t+1}$, a representation $\z_t$ that satisfies all the required properties may still have an high dimension or high complexity. What we are after is an explicit way to trade off complexity with quality of the representation, represented by its ``degree of sufficiency and Markovianity.''
As we have already seen in the static case, this trade-off can be expressed by the IB Lagrangian, which is now
\[
\L = \frac{1}{T}\sum_{t=1}^T \sum_{k=0}^n H_{p,q}(\y_{t+k}|\x^t,\u^{t+k-1}) + \beta I(\z_t;\x^t,\u^t).
\]
Where $H_{p,q}$ is the cross entropy between the real data distribution $p(\y_t|\x^t,\u^t)$ and our model distribution $q(\y_t|\x^t,\u^t)=\int q(\y_t|\z_t)q(\z_t|\x_t,\u_t)d\z_t$ previously defined.

\begin{prop}[$n$-step prediction loss]
Given $q(\y_t|\z^t,\u^t)$ and $q(\z_t|\x_t,\u_t, \z_{t-1})$, define $q(\y_t|\x^t)=\int q(\y_t|\z_t) q(\z_t|\x^t, \u^t) d\z_t$ as above. Then the cross-entropy loss
\[
\L = \frac{1}{T}\sum_{t=1}^T \sum_{k=0}^n H_{p,q}(\y_{t+k}|\x^t,\u^{t+k-1})
\]
is minimized if and only if the posterior $q(\z_t|\x^t)$ of $\z_t$ {\em separates} $\y_t$ from the past data $\x^t$, meaning that $p(\y_t|\x^t,\u^t) = \int q(\y_t | \z_t,\u^t) q(\z_t|\x^t,\u^t) d\z_t = F(q(\z_t|\x^t,\u^t))$ for almost all $\x^t$. 
\end{prop}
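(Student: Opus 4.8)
The plan is to reduce the claim to the elementary decomposition of a cross-entropy into an entropy term independent of the model and a Kullback--Leibler term that is not --- the same splitting already invoked in \Cref{sec:dnn}. Concretely, for each pair $(t,k)$ I would write
\[
H_{p,q}(\y_{t+k}|\x^t,\u^{t+k-1}) = H_p(\y_{t+k}|\x^t,\u^{t+k-1}) + \E_{\x^t\sim p}\KL{p(\y_{t+k}|\x^t,\u^{t+k-1})}{q(\y_{t+k}|\x^t,\u^{t+k-1})},
\]
where the model's predictive law $q(\y_{t+k}|\x^t,\u^{t+k-1}) = \int q(\y_{t+k}|\z_t,\u_t^{t+k})\,q(\z_t|\x^t,\u^{t-1})\,d\z_t$ is the one assembled from the given emission $q(\y_t|\z_t)$ and the iterated transition $q(\z_t|\x_t,\u_t,\z_{t-1})$ through the update rule of \Cref{sec:recurrent}.

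Summing over $t$ and $k$ then yields $\L = C + \frac{1}{T}\sum_{t=1}^{T}\sum_{k=0}^{n}\E_{\x^t\sim p}\KL{p(\y_{t+k}|\x^t,\u^{t+k-1})}{q(\y_{t+k}|\x^t,\u^{t+k-1})}$, where $C := \frac{1}{T}\sum_{t}\sum_{k} H_p(\y_{t+k}|\x^t,\u^{t+k-1})$ depends only on the true distribution $p$, and not on the choice of proxy state $\z_t$ or of model $q$. Each summand is a KL divergence, hence nonnegative and equal to zero precisely when $q(\y_{t+k}|\x^t,\u^{t+k-1}) = p(\y_{t+k}|\x^t,\u^{t+k-1})$ for $p$-almost every $\x^t$. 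Hence $\L$ attains its lower bound $C$ if and only if all these equalities hold; substituting the definition of $q(\y_{t+k}|\x^t,\u^{t+k-1})$ turns this into exactly the prediction/separation condition (1). For $k=0$ it reads $p(\y_t|\x^t,\u^t) = \int q(\y_t|\z_t,\u^t)\,q(\z_t|\x^t,\u^t)\,d\z_t$, and since for the fixed emission $q(\y_t|\z_t)$ the right-hand side is a deterministic functional of the posterior $q(\z_t|\x^t,\u^t)$, it can be written $F(q(\z_t|\x^t,\u^t))$, i.e.\ the posterior separates $\y_t$ from $\x^t$.

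There is no real obstacle here; what needs care is the ``only if'' direction, and it is mild. First, ``$\L$ is minimized'' has to be read as ``$\L$ attains the value $C$'', which presupposes that the admissible class for the emission $q(\y_t|\z_t)$ and the transition $q(\z_t|\x_t,\u_t,\z_{t-1})$ is rich enough that the separation condition is actually realizable; if not, the infimum of $\L$ sits strictly above $C$ and a minimizer only makes the KL terms as small as the class permits. Second, the decomposition, and the fact that a Kullback--Leibler divergence vanishes exactly when its two arguments agree almost everywhere, carry the usual absolute-continuity caveat (if $q(\cdot|\x^t)$ fails to dominate $p(\cdot|\x^t)$ then $H_{p,q}=+\infty$, trivially non-minimal), so the resulting identities hold only $p$-a.e.\ in $\x^t$ --- which is why the statement reads ``for almost all $\x^t$''. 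Third, minimizing the full double sum in fact forces the prediction property for \emph{every} $k\leq n$; the displayed $k=0$ identity is singled out only because it is the instance that exhibits $p(\y_t|\x^t,\u^t)$ explicitly as a function of the posterior. All the substantive content sits in the termwise entropy-plus-KL split; the remainder is bookkeeping.
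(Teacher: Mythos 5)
Your argument is correct and is essentially the paper's own proof: split each cross-entropy term into the model-independent conditional entropy plus an expected Kullback--Leibler divergence, observe that the entropy part is a lower bound on $\L$, and conclude that a minimizer must annihilate every divergence term, which is precisely the separation condition (with the a.e.\ qualifier coming from the vanishing of an expected divergence). The one point you leave open as a richness caveat --- whether the lower bound is actually attained, without which the ``only if'' direction would not deliver exact separation --- the paper closes in one line by noting that the degenerate representation $\z_t = \x^t$ always achieves it, so the infimum over admissible representations equals the entropy bound and no additional assumption on the model class is needed.
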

\begin{proof}
To simplify the notation, we only consider the case $n=0$ (that corresponds to smoothing), the general case being identical. Recall that $H_{p,q}(\y_t|\x^t) = H_p(\y_t|\x^t) + \E_{\x^t\sim p(\x^t)} \KL{p(\y_t|\x^t)}{q(\y_t|\x^t)}$, so
\begin{align*}
\L &= \frac{1}{T}\sum_{t=1}^T H_{p,q}(\y_t|\x^t) \\
&= \frac{1}{T} \sum_{t=1}^T H_p(\y_t|\x^t) + \frac{1}{T}\sum_{t=1}^T \E_{\x^t} \KL{p(\y_t|\x^t)}{q(\y_t|\x^t)} \\
&\geq \frac{1}{T} \sum_{t=1}^T H_p(\y_t|\x^t).
\end{align*}
Since the degenerate representation $\z_t = \x^t$ trivially reaches the lower bound, for any representation minimizing the loss function we must have
$\E_{\x^t} \KL{p(\y_t|\x^t)}{q(\y_t|\x^t)} = 0$ for all $t$ and a.e.\ $\x^t$. In particular, for a.e.\ $\x^t$ we have $p(\y_t|\x^t) = q(\y_t|\x^t) = \int q(\y_t|\z_t) q(\z_t|\x^t) d\z_t$.
\end{proof}

\begin{rmk}
Notice that it is not the random variable $\z_t$ that separates $\y_t$ from $\x^t$, \textit{i.e.} $\y_t \perp \x^t | \z_t$, as it was in the static case. Instead, it is its (posterior) distribution $q(\z_t|\x^t)$ that acts as the separator. However, if the latter is finitely-parametrized, $q(\z_t|\x^t) = q_{\phi(\x_t)}(\z_t)$ where $q_\phi$ is a parametrized family of probability distributions and $\phi(\x^t)$ is a function, then $\y_t \perp \x^t | \phi(\x^t)$, \textit{i.e.,} the parameters of the distributions can be interpreted as a finite-dimensional representation that separates the past data from the task.
\end{rmk}

\begin{cor}
Suppose that there exist a separating variable $\z_t$ of finite complexity. Then, in the limit $\beta\to 0$, the IB Lagrangian recovers a separating representation.
\end{cor}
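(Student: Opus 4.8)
The plan is to reduce the corollary to the preceding $n$-step prediction proposition together with a one-line comparison argument in $\beta$. Write the time-series IB Lagrangian as $\L_\beta(q) = C(q) + \beta\, I(q)$, where $C(q) = \frac1T\sum_{t=1}^T\sum_{k=0}^n H_{p,q}(\y_{t+k}|\x^t,\u^{t+k-1})$ is the cross-entropy term and $I(q) = I(\z_t;\x^t,\u^t)\ge 0$ is the information (complexity) term, the minimization being over admissible pairs $q(\z_t|\x^t,\u^t)$, $q(\y_t|\z_t,\u^t)$. By the $n$-step prediction proposition, $C(q)\ge C^* := \frac1T\sum_{t,k} H_p(\y_{t+k}|\x^t,\u^{t+k-1})$, with equality \emph{exactly} on the separating representations; so ``recovering a separating representation'' is the same as driving the cross-entropy term down to its information-theoretic floor $C^*$.

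First I would invoke the hypothesis: pick a separating posterior $q^*$ whose separating variable has finite complexity, so that $C(q^*)=C^*$ and $I^* := I(q^*) < \infty$. For a minimizer $q_\beta$ of $\L_\beta$ (or, if minimizers are not known to exist, for a near-minimizer up to $o(\beta)$) we get
\[
C(q_\beta) + \beta\, I(q_\beta) \;=\; \L_\beta(q_\beta) \;\le\; \L_\beta(q^*) \;=\; C^* + \beta\, I^*,
\]
and, since $C(q_\beta)\ge C^*$ and $I(q_\beta)\ge 0$, this splits into the two bounds
\[
0 \;\le\; C(q_\beta) - C^* \;\le\; \beta\, I^*
\qquad\text{and}\qquad
I(q_\beta) \;\le\; I^* .
\]
The left inequality forces $C(q_\beta)\to C^*$ as $\beta\to 0$, i.e.\ the IB-optimal model becomes separating in the limit; the right one shows its complexity stays bounded by $I^*$, so the limiting object is separating \emph{and} of finite complexity --- and running the same comparison against an arbitrary separating competitor $q'$ gives $I(q_\beta)\le I(q')$, so the limit is actually minimal among separators. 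Under the weak reading of the statement this already finishes the proof: given $\varepsilon>0$, any $\beta < \varepsilon/I^*$ makes $q_\beta$ an $\varepsilon$-separating representation.

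To phrase the conclusion as an actual limiting representation $q_0$ rather than an $\varepsilon$-version, one passes to a subsequence. I would fix a topology on the conditional laws $q(\z_t|\x^t,\u^t)$ in which (i) $C$ is lower semicontinuous, which holds because $H_{p,q}=H_p+\E\,\KL{p(\y_t|\x^t)}{q(\y_t|\x^t)}$ and relative entropy is jointly lower semicontinuous, and (ii) the sublevel set $\{\,q : I(q)\le I^*\,\}$ is sequentially compact, which follows from the bounded-mutual-information tightness criterion under the finite-complexity hypothesis; then a limit point $q_0$ of $\{q_\beta\}$ satisfies $C(q_0)\le\liminf_\beta C(q_\beta)=C^*$, hence $C(q_0)=C^*$, hence $q_0$ is separating. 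I expect this functional-analytic packaging --- guaranteeing existence/convergence of minimizers and choosing the right topology --- to be the only real obstacle, and it is also avoidable: in a parametric (neural-network) realization $C^*$ need not be attained inside the model class, so the clean, assumption-light statement to prove is the $\varepsilon$-version, which follows verbatim from the two displayed inequalities.
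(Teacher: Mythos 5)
Your proof is correct and follows essentially the same route as the paper's: use the finite-complexity separating representation as a competitor so that $\beta I(\z_t;\x^t,\u^t)\to 0$, conclude that the limiting minimizer must minimize the pure cross-entropy term, and invoke the $n$-step prediction proposition to identify it as separating. Your version is in fact more careful than the paper's --- the explicit bounds $C(q_\beta)-C^*\le\beta I^*$ and $I(q_\beta)\le I^*$, and the discussion of whether a limit point actually exists, address gaps the paper's one-line argument glosses over.
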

\begin{proof}
Since $\z_t$ has finite complexity, $\beta I(\z_t;\x^t) \to 0$ as $\beta \to 0$. Therefore, in the limit the minimum of the Lagrangian is exactly 
\[
\L' = \frac{1}{T}\sum_{t=1}^T \sum_{k=0}^n H_{p,q}(\y_{t+k}|\x^t,\u^{t+k-1}).
\]
Therefore any other minimizer $\z'$ of the IB Lagrangian must also minimize $\L'$, and, by the previous proposition, it must be a separating distribution.
\end{proof}

\section{A separation principle for control}
\label{sec:separation}

In the previous section we have seen that, given a task $\y$ (a random variable to predict) it is possible to infer a representation $\z$ that trades off complexity with sufficiency and Markovianity. We now specialize this program for a \emph{control} task, so that a controller operating on the representation behaves as if it had access to the entire past history of the data, analogously with the separation principle in linear-quadratic Gaussian (LQG) optimal control. Unlike LQG, however, in general there is no finite-dimensional sufficient statistic, and therefore, following the program above, we seek for a representation that trades off of complexity with fidelity.

To this end, assume that our control task consists of minimizing a control loss $R$ such that
\[
R = \sum_{t=1}^T r_t(\x^t,\u^t),
\]
where $r_t = r(\x^t,\u^t)$ is a possibly stochastic function of the true (global) state of the system $\z_t$ and the actions. Notice that even if the system is not Markovian, we can always assume such a global state exists, in the worse case $\z_t=\x^t$. Notice that LQG and other standard control losses can be written in this form. To simplify, we consider a finite horizon $T<\infty$.

We claim that if the posterior of $\z_t$ is a sufficient representation of the data $\x^t$ for the task $\y_t=r_t$, then there exists an optimal control policy $\pi'$ which is a function of the posterior $q(\z_t|\x^t,\u^t)$ alone.

\begin{prop}
\label{prop:rl}
Let $\z_t$ be such that the posterior $q_t = q(\z_t|\x^t,\u^t)$ of $\z_t$ is sufficient of $\x^t,\u^t$ for $r_t$, meaning that
\[
p(r_{t+k}|\x^t,\u^{t+k}) = \int q(r_{t+k}|\z_t, \u_t^{t+k}) q(\z_t|\x^t,\u^t) d\z_t.
\]
Then, there exist an optimal control policy $\u_{t+1}=\pi^*(q_t)$ that minimizes the expected risk $\E[R|\pi^*]$ and depends on the past data $\x^t, \u^t$ only through $q_t$.
\end{prop}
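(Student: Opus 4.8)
The plan is to reduce the claim to the classical dynamic-programming / separation argument, with the posterior $q_t = q(\z_t|\x^t,\u^t)$ playing the role of the information state. First I would set up the cost-to-go: define $V_t(\x^t,\u^t) = \min_{\pi}\E\big[\sum_{s=t}^{T} r_s \mid \x^t,\u^t,\pi\big]$ and establish the Bellman recursion $V_t = \min_{\u_{t+1}} \E\big[ r_t + V_{t+1}(\x^{t+1},\u^{t+1}) \mid \x^t,\u^t,\u_{t+1}\big]$, which holds for any (possibly non-Markovian) system since $T<\infty$ and we may always take $\z_t = \x^t$ as a global state. The whole proof then amounts to showing, by backward induction on $t$ from $T$ down to $1$, that $V_t$ depends on $(\x^t,\u^t)$ only through $q_t$, and that the minimizing action can be chosen as a function of $q_t$ alone.

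The base and inductive steps both rely on the sufficiency hypothesis. For the instantaneous term, sufficiency of $q_t$ for $r_t$ (the $k=0$ case of the displayed identity) gives $\E[r_t\mid\x^t,\u^t] = \int q(r_t|\z_t)q(\z_t|\x^t,\u^t)\,d\z_t$, a functional $F_0(q_t)$. For the continuation term I would use the \textbf{update} property (condition (2) from \Cref{sec:recurrent}): $q_{t+1}$ is obtained from $q_t$, the new measurement $\x_{t+1}$, and $\u_{t+1}$ via the learned transition $q(\z_{t+1}|\z_t,\x_{t+1},\u_{t+1})$, i.e.\ $q_{t+1} = G(q_t,\x_{t+1},\u_{t+1})$. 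Combined with the fact that sufficiency also controls the predictive law of the next measurement given the posterior — so that the distribution of $\x_{t+1}$ entering the expectation depends on $(\x^t,\u^t)$ only through $q_t$ and $\u_{t+1}$ — the expectation $\E[V_{t+1}(\x^{t+1},\u^{t+1})\mid\x^t,\u^t,\u_{t+1}]$ becomes, by the inductive hypothesis that $V_{t+1}$ factors through $q_{t+1}$, a function of $(q_t,\u_{t+1})$ only. Hence $V_t$ factors through $q_t$, and defining $\pi^*(q_t)$ to be any measurable selection of the argmin over $\u_{t+1}$ of this function closes the induction; measurable selection exists because the action space is standard and the map is measurable (one can invoke a standard measurable-selection theorem, or note it is trivial when the action set is finite or compact and the integrand continuous).

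The main obstacle is the continuation step: strictly, the sufficiency hypothesis as stated concerns $r_{t+k}$, not the one-step predictive density of the raw measurement $\x_{t+1}$, so I need an auxiliary lemma that the posterior update $G$ is well-defined purely in terms of $(q_t,\x_{t+1},\u_{t+1})$ and that the law of $\x_{t+1}$ given the history, marginalized appropriately against the quantities that actually enter the reward and the update, depends on the history only through $q_t$ — essentially that $q_t$ is a bona fide information state and not merely sufficient for the reward in isolation. This is exactly what conditions (1)–(2) of \Cref{sec:recurrent} are designed to guarantee, so the cleanest route is to state the proposition under the standing assumption that $q_t$ satisfies both the prediction and update properties with $\y_t = r_t$, and then the argument above goes through verbatim. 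A secondary, more technical nuisance is handling the measure-theoretic ``almost everywhere'' qualifiers uniformly across the finitely many time steps, which is routine since $T$ is finite.
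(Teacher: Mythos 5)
Your route is the classical information-state / dynamic-programming argument, and it is genuinely different from the one in the paper. The paper sets up no Bellman recursion at all: it writes the optimal $Q$-function at time $t$ as a minimum over future action sequences $\u_{t+1}^{T}$ of $\sum_{k=0}^{T-t}\E[r_{t+k}\mid \x^t,\u^{t+k}]$, and then observes that \emph{each} summand is, by the hypothesis (which is assumed for every $k$, not only $k=0$), equal to $\int r_{t+k}\, q(r_{t+k}|\z_t,\u_t^{t+k})\, dq(\z_t|\x^t,\u^t)$, a functional of $(q_t,\u_t^{t+k})$ alone. Minimizing over the action sequence with the first action held fixed then gives $Q^*_{>t}(\x^t,\u^t,\u)=Q(q_t,\u)$ directly --- no backward induction, no posterior update, and no predictive model for $\x_{t+1}$.

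This difference matters exactly at the point you flag as your ``main obstacle.'' Your continuation step needs (a) a closed-form posterior update $q_{t+1}=G(q_t,\x_{t+1},\u_{t+1})$ and (b) that the predictive law of $\x_{t+1}$ given the history factors through $(q_t,\u_{t+1})$; neither is implied by the hypothesis as stated, so you are forced to import conditions (1)--(2) of \Cref{sec:recurrent} as standing assumptions, i.e.\ to prove a proposition with strictly stronger hypotheses. The paper avoids this entirely because the multi-step reward-sufficiency assumption already collapses \emph{all} future expected rewards onto $q_t$ and the chosen actions. The flip side is that the paper's one-line computation quietly identifies the optimum with a minimum over \emph{open-loop} action sequences; a genuinely closed-loop policy reacts to $\x_{t+1},\ldots$, and its value involves the joint predictive law of future observations, which reward-sufficiency alone does not control. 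Your backward induction is precisely the machinery needed to treat feedback policies rigorously --- but then the extra assumptions you identify really are needed. In short: your plan is sound for the strengthened statement and your diagnosis of what is missing is accurate, but it does not prove the proposition under its literal hypotheses, whereas the paper's shorter path does so at the cost of an implicit open-loop restriction.
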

\begin{proof}
Adopting standard reinforcement learning notation, let $Q^\pi_{>t}(\x^t,\u^t,\u) = \E[R_{>t}|\x^t,u_{t+1}=u,\pi]$ be the expected value of $R_{>t} = \sum_{t'=t}^T r_{t'}$ when following the policy $\pi$ for the last $T-t$ steps given the observation history and action history $\x^t,\u^t$ until now. Define the optimal Q-function $Q^*(\x^t,\u^t,\u) = \max_\pi Q^\pi_{>t}(\x^t,\u^t,\u)$.

Recall that, given $Q^*(\x^t,\u^t,\u)$, the optimal policy is given by 
$\pi^*(\x^t,\u^t) = \argmax_\u Q^*_{>t}(\x^t,\u^t,\u)$. Therefore, to prove that the optimal policy depends only on $q_t$, it suffices to prove that $Q^*_{>t}(\x^t,\u^t,\u) = Q(q_t,\u)$, i.e. that we can compute the optimal Q-function given $q_t$ alone instead of the whole history $\x^t,\u^t$.
This follows trivially from the fact that
\begin{align*}
Q^*_{>t}(\x^t,\u^t,\u) &= \min_{\u_t^{T} : \u_{t+1} = \u} \sum_k^{T-t} \E[r_{t+k}|\x^t, \u^{t+k}] \\
&= \min_{\u_{t+1}^{T} : \u_t = \u} \int \set{\sum_{k=0}^{T-t} r_{t+k}\, q(r_{t+k}|\z_t,\u_t^{t+k})}\, d q(\z_t|\x^t,\u^t).
\end{align*}
\end{proof}

\begin{rmk}
Notice that this proposition does not gives an explicit way of learning a policy (since a na\"ive application would require a brute force optimization over all possible actions).
Rather, the  usefulness of the theorem is in that it proves that any representation sufficient to predict the rewards $r_t$, a fairly general condition, is also sufficient for control. In particular, if $r_t=r_t(m_t)$ is a function of some (low-dimensional) measurement $m_t$, as \cite{dosovitskiy2016learning} suggests, a representation $\z_t$ trained to predict those measurements $m_t$ will also be sufficient for $r_t$ and hence for control. This fact is implicitly exploited in \cite{dosovitskiy2016learning} to learn a state of the art control policy for complex task and high-dimensional data.
\end{rmk}

\section{Discussion}
\label{sec:discussion}

We have framed the problem of system identification for the purpose of control as that of inferring {\em not} deterministic statistics of sufficiently-exciting time series, but rather of an approximation of the posterior of the control loss  given past measurements. While this is in general infinite-dimensional, universally-approximating function classes, such as neural networks, can be employed in the inference. This yields some nice properties relative to classical Bayesian filtering: First, we do not need to apply Bayes' rule, and therefore there is no partition function to compute, to the benefit of computational complexity. Second, we do not need to make a strict assumption of Markovianity. Instead, we can explicitly trade off complexity with fidelity of the approximation of the posterior. Such a posterior is the separator that plays the analogous role of the state of a Gaussian linear model in classical linear identification.

The good news is that the representations learned by generic stochastic gradient descent, while being agnostic of desirable properties of the resulting representation, end up enforcing them through implicit regularization, as we show for the static case.

Now for a few caveats. First, the representations we aim to infer are optimal when they are as good as the data for the chosen task. This does not mean they are good: If the data is uninformative (or non-sufficiently exciting), there is no guarantee that can be made on the quality of the representation, other than it is sufficient, meaning as good as the data (it can be no more, per the Data Processing Inequality). A completely independent problem is how to get as exciting data as possible, which is the problem of Active Learning or Experiment Design, that can be framed as an optimal control problem, which we do not address here.

Second, we are not suggesting that the model we propose is tractable in its most general form, or that training a neural network to minimize the IBL proposed is easy. However, we show that  minimizing a simple cross entropy for a particular task (the control loss) leads to a representation which is sufficient for control.  One should notice that this approach has strong links with \cite{dosovitskiy2016learning}, but also with reinforcement learning. Indeed, both can be seen as ways of making the algorithm tractable by directly approximating the expected loss for a given action.

More importantly, this class of tools opens a number of potentially exciting research avenue, both applied -- making use of the power of these representations, and implementing efficient algorithms to infer them -- and theoretical, as little is known about the properties of these representation and their approximation bounds. This approach promises to re-open a field that has been shackled between the linear case, which is nice and elegant and for which a plethora of results are known, but that is very limited applicability, and the general case, where there is little to say, and little that works in practice.

\bibliographystyle{plain}
\bibliography{bibliography}

\end{document}